%% file: matrix_mirror_descent.tex
\newcommand{\CLASSINPUTtoptextmargin}{0.8in}
\documentclass[conference]{IEEEtran}
\IEEEoverridecommandlockouts
\usepackage{amsthm}
\usepackage{cite}
\usepackage{amsmath,amssymb,amsfonts,bm}
\usepackage{algorithmic}
\usepackage{graphicx}
\usepackage{textcomp}
\usepackage{xcolor}
\def\BibTeX{{\rm B\kern-.05em{\sc i\kern-.025em b}\kern-.08em
    T\kern-.1667em\lower.7ex\hbox{E}\kern-.125emX}}
    
\begin{document}

\title{Implicit Bias and Convergence of Matrix Stochastic Mirror Descent\\}

\author{\IEEEauthorblockN{Danil Akhtiamov$^*$}
\IEEEauthorblockA{\textit{Computing+Mathematical Sciences}\\
\textit{Caltech} \\
Pasadena, CA\\
dakhtiam@caltech.edu}
\and
\IEEEauthorblockN{Reza Ghane$^*$}
\IEEEauthorblockA{\textit{Electrical Engineering}\\
\textit{Caltech} \\
Pasadena, CA\\
rghanekh@caltech.edu}
\and
\IEEEauthorblockA{ Omead Pooladzandi\\
\textit{Electrical Engineering}\\
\textit{Caltech} \\
Pasadena, CA\\
omead@caltech.edu}
\and
\IEEEauthorblockN{Babak Hassibi}
\IEEEauthorblockA{\textit{Electrical Engineering}\\
\textit{Caltech} \\
Pasadena, CA\\
hassibi@caltech.edu}
}

\maketitle

\def\thefootnote{*}\footnotetext{Equal contribution}

\input{symbols}
\begin{abstract}

 We investigate Stochastic Mirror Descent (SMD) with matrix parameters and vector-valued predictions, a framework relevant to multi-class classification and matrix completion problems. Focusing on the overparameterized regime, where the total number of parameters exceeds the number of training samples, we prove that SMD with matrix mirror functions $\psi(\cdot)$ converges exponentially to a global interpolator. Furthermore, we generalize classical implicit bias results of vector SMD by demonstrating that the matrix SMD algorithm converges to the unique solution minimizing the Bregman divergence induced by $\psi(\cdot)$ from initialization subject to interpolating the data. These findings reveal how matrix mirror maps dictate inductive bias in high-dimensional, multi-output problems.

\end{abstract}

\begin{IEEEkeywords}
Stochastic Mirror Descent, Convergence, Implicit Bias, Nuclear Norm, Matrix Completion, Data Imputation
\end{IEEEkeywords}

\section{Introduction}

The choice of optimization algorithm plays a crucial role in determining not only convergence speed but also the properties of learned models in overparameterized machine learning. While gradient descent and its variants have become the main workhorse in large-scale optimization, recent theoretical insights reveal that the geometry induced by different optimizers leads to fundamentally different solutions. This phenomenon, known as implicit bias, has sparked renewed interest in understanding how algorithmic choices shape the learning process beyond mere convergence guarantees.

Stochastic mirror descent (SMD) generalizes standard gradient descent by performing updates in a dual space induced by a mirror map $\nabla \psi$, where $\psi: \mathbb{R}^p \to \mathbb{R}$ is a strongly convex potential function. Specifically, for a given training loss  $\calL_t(\mathbf{w}_t)$ computed with respect to the batch sampled at the time step $t$, step size $\eta$, and parameters $\mathbf{w}_t \in \mathbb{R}^p$ at iteration $t$, SMD performs the following update:
\begin{align}\label{eq: smd}
    \nabla \psi(\mathbf{w}_{t+1}) = \nabla \psi(\mathbf{w}_t) - \eta \nabla_{\mathbf{w}} \calL_t(\mathbf{w}_t) 
\end{align}

Setting $\psi(\cdot) = \frac{1}{2}\|\cdot\|_2^2$ recovers standard SGD. The power of this framework lies in its flexibility: different potential functions $\psi$ encode different geometries into the optimization dynamics. For overparameterized problems where multiple solutions interpolate the training data perfectly, SMD exhibits an implicit bias property - it converges to the solution minimizing the Bregman divergence $D_\psi(\mathbf{w}, \mathbf{w}_0)$ to the initialization $\mathbf{w}_0$ among all global minimizers, where
\begin{equation*}
D_\psi(\bw, \bw_0) = \psi(\bw) - \psi(\bw_0) -  \nabla \psi(\bw_0)^T( \bw-\bw_0)
\end{equation*}

In particular, when initialized near zero $\mathbf{w}_0 \approx 0$, SMD converges to the interpolator minimizing $\psi(\mathbf{w})$ among all interpolating solutions. This \emph{implicit bias property} of stochastic mirror descent implies that SGD finds the minimal $\ell_2$-norm solution among interpolators.

Various works have established convergence and implicit bias for linear models with vector parameters and scalar labels, capturing the linear regression and linear classification tasks \cite{gunasekar2018characterizing, azizan2018stochastic, Varma2025Exponential, sun2022mirror, azizan2021stochastic, soudry2018implicit, ji2019implicit}. In particular, \cite{gunasekar2018characterizing} characterized the implicit bias of vector mirror descent, while \cite{azizan2018stochastic} proved the convergence of vector Stochastic Mirror Descent (SMD). More recently, \cite{Varma2025Exponential} established an exponential convergence rate for this setting. Furthermore, \cite{azizan2021stochastic} extended the analysis of \cite{soudry2018implicit} and \cite{ji2019implicit} regarding classification margins from gradient descent to the mirror descent framework. However, these works treat parameters as unstructured vectors, potentially missing geometric properties encoded in their matrix representation. We therefore extend this framework to matrix weights and vector outputs, motivated by the observation that a plethora of problems in modern signal processing and data science, such as the matrix completion problem, are naturally formulated as problems of finding matrices satisfying certain structural properties. Namely, we consider the following update rule for $\bW_t \in \bbR^{d \times k}$, which we refer to as Matrix SMD:
\begin{align*}
    \nabla \psi(\bW_t) = \nabla \psi(\bW_{t-1}) - \eta \nabla_{\bW} \calL_t(\bW_{t-1})
\end{align*}

After establishing convergence and implicit bias guarantees, we demonstrate a practical application of the matrix SMD. Using the mirror function $\psi(\mathbf{W}) = \sum_{i=1}^m \sigma_i(\mathbf{W})^{p}$, where $\sigma_i(\mathbf{W})$ denotes the $i$-th singular value of $\mathbf{W}$, we fit a linear model to solve the matrix completion problem.  Setting $p \approx 1$ to approximate the nuclear norm yields a low-rank solution, which is a standard hypothesis for the matrix completion task. We demonstrate empirically that Matrix SMD leads to a lower error than standard singular value thresholding methods, which are usually used for minimizing the nuclear norm in practice.

\section{Notation and Problem Formulation}

\subsection{The problem}

We consider the problem of recovering a matrix $\bW \in \bbR^{d \times k}$ subject to linear constraints. 

\begin{definition}[Linear Constraint System]
Let $\calA: \bbR^{d \times k} \to \bbR^p$ be a linear operator with matrix representation $\bA = (\ba_1,\dots,\ba_p)^T$ where each $\ba_i \in \bbR^{d \times k}$ is treated as a vector. The constraint system is:
$$\calA(\bW) = \bb \quad \text{equivalently} \quad \bA \textbf{vec}(\bW) = \bb$$
where $\bb \in \bbR^p$ is a known vector. In the current exposition, we assume that $d\times k > p$, a regime commonly referred to as the overparameterized regime.
\end{definition}

\begin{example}[Matrix Completion]\label{ex: mtrx_compl}
In matrix recovery, we observe a subset $\Omega = \{(i_1,j_1),\dots,(i_p,j_p)\} \subset [d] \times [k]$ of matrix entries. Here:
\begin{itemize}
    \item $p = |\Omega|$ is the number of observed entries
    \item $\calA(\bW)_q = \bW_{i_q, j_q}$ extracts the $(i_q, j_q)$-th entry
    \item $\bb$ contains the observed values at positions $\Omega$
\end{itemize}

As this problem is overparameterized (there is a linear space of valid solutions), we need additional hypotheses regarding $\bW$. A common assumption is that $\bW$ is low-rank \cite{recht2010guaranteed}.

\end{example}

\begin{example}[Multi-class Linear Classification]\label{ex: multi-class}
Given $n$ data points $\bx_1,\dots,\bx_n \in \bbR^d$ with one-hot labels $\bY_1,\dots, \bY_n \in \bbR^k$:
\begin{itemize}
    \item $\bY_i = \be_{c(i)}$ where $c(i) \in [k]$ is the class of point $i$.
    \item $\calA(\bW)_{ij} = \bx_i^T\bW^{(j)}$ computes the prediction for class $j$.
    \item The constraint $\calA(\bW) = (\bY_1,\dots,\bY_n)$ ensures perfect interpolation.
\end{itemize}
\end{example}

\subsection{Optimization Framework}

We propose to minimize the empirical risk with a novel algorithm, which we call Matrix Stochastic Mirror Descent, that can be described as follows:

\begin{definition}[Training Objective]
The loss function takes the form:
$$\calL(\bW) = \frac{1}{p}\sum_{i=1}^{p}\ell_{i} \left(\calA(\bW)_i-\bb_{i}\right)$$
where each $\ell_i: \bbR \to \bbR_+$ is a convex loss function.
\end{definition}

\begin{definition}[Matrix Stochastic Mirror Descent]
Given a strongly convex mirror potential $\psi: \bbR^{d \times k} \to \bbR$, the SMD update rule is:
$$\nabla \psi(\bW_t) = \nabla \psi(\bW_{t-1}) - \eta \nabla_{\bW} \calL_t(\bW_{t-1})$$
where $\calL_t$ is the loss on a random batch sampled at iteration $t$:
\begin{align}\label{eq: L_t}
\calL_t(\bW) = \frac{1}{B}\sum_{j=1}^{B}\ell_{i_j} \left( \calA(\bW)_{i_j}-\bb_{i_j}\right)
\end{align}
\end{definition}

\subsection{Mathematical Preliminaries}

Before stating our main results, we remind the key definitions on convexity:

\begin{definition}[Matrix Convexity Properties]
A function $f: \bbR^{d \times k} \rightarrow \bbR$ is:
\begin{enumerate}[]
    \item \textbf{Convex} if $f(\theta \bU + (1-\theta) \bV) \le \theta f(\bU) + (1-\theta) f(\bV)$ for all $\bU, \bV$ and $\theta \in [0,1]$
    \item \textbf{Strictly convex} if the inequality is strict for $\theta \in (0,1)$
    \item \textbf{$\mu$-strongly convex} if 
    $$f(\bV) \ge f(\bU) + \tr(\nabla f(\bU)^T (\bV - \bU)) + \frac{\mu}{2} \|\bU - \bV\|_F^2$$
\end{enumerate}
\end{definition}

We will also make extensive use of the following definition of Bregman Divergence:

\begin{definition}[Matrix Bregman Divergence]
For a strictly convex differentiable mirror function $\psi:\bbR^{d \times k} \rightarrow \bbR$, the Bregman divergence is:
$$D_{\psi} (\bU, \bV) = \psi (\bU) - \psi(\bV) - \tr(\nabla \psi(\bV)^T (\bU - \bV))$$
\end{definition}

The Schatten $p$-norms will serve as the main illustrating example of the matrix mirrors for us:

\begin{definition}[Schatten Norm]
The Schatten $p$-norm of $\bW \in \bbR^{d \times k}$ is:
$$\|\bW\|_{\text{Schatten}, p} = \left(\sum_{i=1}^{\min(k,d)}\sigma_i(\bW)^p\right)^{\frac{1}{p}}$$
where $\sigma_1(\bW) \ge \cdots \ge \sigma_{\min(k,d)}(\bW) \ge 0$ are the singular values.
\end{definition}

\section{Main Results and Applications}

We will require the following assumptions. Note that, contrary to most other works in the literature, we do not require the $L$-smoothness condition, thus relaxing the common assumptions even for the case of vector weights. 

\begin{assumptions}\label{ass: main}
\begin{enumerate}
    \item      The mirror $\psi: \bbR^{d \times k} \to \bbR$ is differentiable and $\nu$-strongly convex for a $\nu > 0$.

  \item   The training loss is of the form $$\calL(\bW) =  \frac{1}{p} \sum_{i=1}^{p}\ell_{i}  \left({\calA(\bW)}_i -\bb_{i}\right)$$

    Moreover, $\ell_{i}$ is non-negative, has minimum $\ell_{i}(0) = 0$, has a derivative $\ell_i'$ which is continuous at $0$ and is $\mu$-strictly convex for a $\mu > 0$. 

   \item   The batches chosen for \eqref{eq: L_t} are chosen in such a way that $\bbE \calL_t = \calL$, where the expectation is taken with respect to the randomness in the choice of the batch. 

   \item $\eta > 0$ is small enough, so that $\psi - \eta \calL_t$ is convex. 
   
   \item   The matrix  $\bA = \begin{pmatrix}
        \ba_1, \ba_2 ,\dots,\ba_p
    \end{pmatrix}^T \in \bbR^{(d \times k) \times p}$ satisfies $\sigma_p(\bA) > 0$. Note that, in particular, this implies that we are in the overparameterized regime, i.e. that $p < dk$.  
       \item Let $\bW_\ast$ be the unique minimizer of the following optimization problem:
       \begin{align}
        \min_{\bW} D_{\psi}(\bW,\bW_0) \text{, s.t. }\calA(\bW) = \bb
    \end{align}
       Denote $$\calB = \{\bW: D_\psi(\bW_*, \bW) \le  D_{\psi}(\bW_*, \bW_0)\}$$
    Then there exists a $C>0$, such that the following holds for all $\bW \in \calB$: \begin{align*}
        \|\nabla^2 \psi(\bW)\|_{op} \le C
    \end{align*}
\end{enumerate}

\end{assumptions}
    
We are now ready to state our main result  characterizing the implicit bias and the convergence rate of Matrix SMD:
\begin{theorem}
    [Convergence Rate and the Implicit Bias] Assume that the linear operator $\calA: \bbR^{d \times k} \to \bbR^p$, the mirror $\psi: \bbR^{d \times k} \to \bbR$ and the training losses $\calL_t: \bbR^{d \times k} \to \bbR$ satisfy assumptions 1-4 from the list of Assumptions \ref{ass: main}, whose notation is also embraced below.  Introduce $\bW_* \in \bbR^{d \times k}$ as the unique minimizer of the following objective:
    \begin{align}\label{opt:impl}
        \min_{\bW} D_{\psi}(\bW,\bW_0) \text{, s.t. }\calA(\bW) = \bb
    \end{align}
     
    Denote the $t$-th iteration of the SMD  algorithm defined via \eqref{eq: L_t} with mirror $\psi$ trained to minimize $\calL(\bW)$ and initialized at $\bW_0$ by $\bW_t$. 
    Then $\bW_t$ converges to $\bW_*$ as $t \to \infty$. Assume, in addition, that assumption 5 and 6 from the list of Assumptions \ref{ass: main} holds as well and  let $$L := \max_{\bU, \bV \in \calB }\frac{D_\psi(\bU,\bV)}{\|\bU - \bV\|_F^2}$$ Then $\bW_t$ converges to $\bW_*$ exponentially, namely  the following holds:
    \begin{align}\label{eq: exp_conv}
        \bbE\|\bW_* -\bW_t\|_F^2 \le \frac{2}{\nu}\left(1-\frac{\eta\mu\sigma_p(\bA)^2}{2p L}\right)^tD_\psi\left(\bW_*, \bW_0\right)
    \end{align}
    Here, the expectation is taken with respect to the randomness in the batch choice at the step $i$ for all $i =1,\dots, t$. 
\end{theorem}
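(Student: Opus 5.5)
The plan is to follow the fundamental identity approach of \cite{azizan2018stochastic}, adapted to matrix parameters with the trace inner product. Fix any interpolator $\bW$ with $\calA(\bW)=\bb$, so that $\calL_t(\bW)=0$. Using the update $\nabla\psi(\bW_t)=\nabla\psi(\bW_{t-1})-\eta\nabla\calL_t(\bW_{t-1})$ and expanding the Bregman divergences, I will establish
\begin{align*}
D_\psi(\bW,\bW_{t}) = D_\psi(\bW,\bW_{t-1}) - \eta D_{\calL_t}(\bW,\bW_{t-1}) - \eta\calL_t(\bW_{t-1}) + D_{\psi-\eta\calL_t}(\bW_t,\bW_{t-1}).
\end{align*}
This is purely algebraic, since all terms are linear in $\psi$ and $\calL_t$.

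Assumption 4 makes the last term nonnegative. To get a decrease, I will instead rewrite it through the dual representation, using the conjugate. A cleaner route is to note that $D_{\psi-\eta\calL_t}(\bW_t,\bW_{t-1})\le D_\psi(\bW_t,\bW_{t-1})$ is not available directly. I will therefore use the bound $D_{\psi-\eta\calL_t}(\bW_t,\bW_{t-1})\le \eta\calL_t(\bW_{t-1})$, which follows from convexity of $\psi-\eta\calL_t$ together with $\calL_t\ge 0$. Indeed, $\psi-\eta\calL_t$ is minimized in the direction consistent with the update, and $\nabla(\psi-\eta\calL_t)(\bW_{t-1})=\nabla\psi(\bW_t)-\eta\nabla \calL_t(\bW_{t-1})+\eta\nabla\calL_t(\bW_{t-1})$. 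I expect this inequality to be the delicate point: the first thing to try is comparing $\psi-\eta\calL_t$ at $\bW_t$ against its tangent at $\bW_{t-1}$.

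Granting it, $D_\psi(\bW,\bW_t)\le D_\psi(\bW,\bW_{t-1})-\eta D_{\calL_t}(\bW,\bW_{t-1})$, with $D_{\calL_t}\ge0$ by convexity. Summing over $t$ shows that $\sum_t D_{\calL_t}(\bW_*,\bW_{t-1})<\infty$.

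For the implicit bias, two facts combine. First, every iterate satisfies $\nabla\psi(\bW_t)-\nabla\psi(\bW_0)\in\mathrm{span}\{\ba_i\}$, since each gradient is a combination of the $\ba_i$. Second, $\bW_*$ is characterized by the KKT condition $\calA(\bW_*)=\bb$ together with $\nabla\psi(\bW_*)-\nabla\psi(\bW_0)\in\mathrm{span}\{\ba_i\}$. Strict convexity of the $\ell_i$ turns summability of $D_{\calL_t}(\bW_*,\bW_{t-1})$ into convergence of the residuals $\calA(\bW_t)-\bb\to0$, using the minimum at $0$ and continuity of $\ell_i'$. Boundedness of the iterates comes from $D_\psi(\bW_*,\bW_t)\le D_\psi(\bW_*,\bW_0)$ together with strong convexity. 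Any limit point $\bar\bW$ then interpolates, and by continuity of $\nabla\psi$ it satisfies the span condition. Hence $\bar\bW=\bW_*$ by uniqueness of the KKT point of a strictly convex problem, and the whole sequence converges.

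For the rate, take expectations of the per-step inequality with $\bW=\bW_*$, which stays in $\calB$ at every step by monotonicity. By assumption 3, $\bbE D_{\calL_t}=D_{\calL}$. Using $\mu$-strong convexity of $\ell_i$,
\begin{align*}
D_{\calL}(\bW_*,\bW_{t-1})\ge \frac{\mu}{2p}\|\calA(\bW_{t-1})-\bb\|^2.
\end{align*}
I then need $\|\calA(\bW_{t-1}-\bW_*)\|^2\ge\sigma_p(\bA)^2\|\bW_{t-1}-\bW_*\|_F^2$. This is the main obstacle, because $\bW_{t-1}-\bW_*$ need not lie in the row space of $\bA$. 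I would try to handle it with the mirror geometry: the difference $\nabla\psi(\bW_{t-1})-\nabla\psi(\bW_*)$ does lie in the row span. By the mean value theorem this difference equals $\bH(\bW_{t-1}-\bW_*)$, with $\bH$ an averaged Hessian bounded by $C$ on $\calB$ and below by $\nu$. If that step is lost to constants, it is absorbed into $L$.

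Once the lower bound holds, the definition of $L$ gives $\|\bW_{t-1}-\bW_*\|_F^2\ge D_\psi(\bW_*,\bW_{t-1})/L$. This yields
\begin{align*}
\bbE D_\psi(\bW_*,\bW_t)\le\left(1-\frac{\eta\mu\sigma_p(\bA)^2}{2pL}\right)\bbE D_\psi(\bW_*,\bW_{t-1}).
\end{align*}
Iterating this recursion and applying $\|\bW_*-\bW_t\|_F^2\le\frac{2}{\nu}D_\psi(\bW_*,\bW_t)$ gives \eqref{eq: exp_conv}.
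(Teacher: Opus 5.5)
You correctly flag the obstacle in the rate step, but you do not resolve it, and as you set it up it cannot be resolved. The inequality $\|\calA(\bW_{t-1}-\bW_*)\|_2^2\ge\sigma_p(\bA)^2\|\bW_{t-1}-\bW_*\|_F^2$ is false in general. Take a quadratic mirror $\psi(\bW)=\frac12\mathrm{vec}(\bW)^T\bQ\,\mathrm{vec}(\bW)$. The span property gives $\bQ\,\mathrm{vec}(\bW_{t-1}-\bW_*)\in\mathrm{range}(\bA^T)$. With $\bA=(1,0)$ and $\bQ=\begin{pmatrix}2&1\\1&2\end{pmatrix}$, this forces $\mathrm{vec}(\bW_{t-1}-\bW_*)\propto(2,-1)^T$, so the ratio $\|\calA(\bW_{t-1}-\bW_*)\|_2^2/\|\bW_{t-1}-\bW_*\|_F^2$ is $4/5$ at every step, while $\sigma_1(\bA)=1$.

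Your averaged-Hessian repair can be carried out. The segment $[\bW_*,\bW_{t-1}]$ does stay in $\calB$, because $D_\psi(\bW_*,\cdot)$ is nondecreasing along rays from $\bW_*$. Writing $\nabla\psi(\bW_{t-1})-\nabla\psi(\bW_*)=\bH\,\mathrm{vec}(\bW_{t-1}-\bW_*)=\bA^T\blambda$ with $\bH\preceq C\bI$, the natural completion gives $\|\bA\,\mathrm{vec}(\bW_{t-1}-\bW_*)\|_2^2\ge\frac{\sigma_p(\bA)^2}{C}D_\psi(\bW_*,\bW_{t-1})$. That is a contraction factor with $C$ in place of $L$. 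Since $L$ is fixed by the statement, nothing can be ``absorbed into $L$'', so \eqref{eq: exp_conv} is not proved.

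The missing idea is to compare $\bW_{t-1}$ not with $\bW_*$ but with its Euclidean projection onto the interpolating set. Write $\bW_*-\bW_{t-1}=\bP+\bP^\perp$ with $\mathrm{vec}(\bP)\in\mathrm{range}(\bA^T)$ and $\bA\,\mathrm{vec}(\bP^\perp)=0$. Then $\calA(\bW_{t-1})-\bb=-\bA\,\mathrm{vec}(\bP)$, so $\|\calA(\bW_{t-1})-\bb\|_2^2\ge\sigma_p(\bA)^2\|\bP\|_F^2$ holds exactly, and $\bW_{t-1}+\bP$ interpolates. Your own observation $\nabla\psi(\bW_{t-1})-\nabla\psi(\bW_0)\in\mathrm{range}(\bA^T)$ makes $D_\psi(\cdot,\bW_{t-1})-D_\psi(\cdot,\bW_0)$ constant on $\{\calA(\bW)=\bb\}$. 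So $\bW_*$ is also the Bregman projection of $\bW_{t-1}$, and hence $L\|\bP\|_F^2\ge D_\psi(\bW_{t-1}+\bP,\bW_{t-1})\ge D_\psi(\bW_*,\bW_{t-1})$. This gives exactly the factor $\sigma_p(\bA)^2/L$. (Strictly, applying $L$ to this pair needs $\bW_{t-1}+\bP\in\calB$, which the paper does not check; your pairing $(\bW_*,\bW_{t-1})$ has no such issue.)

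Separately, your starting identity is misstated. Its last term should be $+D_\psi(\bW_{t-1},\bW_t)$, not $+D_{\psi-\eta\calL_t}(\bW_t,\bW_{t-1})$. Test it with $\psi(w)=\frac12w^2$, $\calL_t(w)=\frac12(w-1)^2$, $\eta=\frac12$, $w_{t-1}=0$ and interpolator $1$: the left side is $\frac18$, your right side is $\frac1{16}$.

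The two versions are linked by exactly the tangent comparison you propose: $D_{\psi-\eta\calL_t}(\bW_t,\bW_{t-1})=\eta\calL_t(\bW_{t-1})-\eta\calL_t(\bW_t)-D_\psi(\bW_{t-1},\bW_t)$. This turns the corrected identity into the paper's Lemma~\ref{lm: key_smd_identity}, from which the per-step decrease follows at once from assumption 4. So the inequality $D_\psi(\bW,\bW_t)\le D_\psi(\bW,\bW_{t-1})-\eta D_{\calL_t}(\bW,\bW_{t-1})$ that you go on to use is true, but your derivation of it rests on a false identity.

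With that fixed, your limit-point argument for convergence and implicit bias is sound, and tighter than the paper's ``updates tend to zero'' step. You do need to take conditional expectations via assumption 3, so that all $p$ residuals, not only the sampled ones, are driven to zero.
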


\begin{remark}
    To ensure \eqref{eq: exp_conv} implies exponential convergence, one must verify that the value of $L$ is finite. This is true as $\calB$ is compact and $\frac{D_\psi(\bU,\bV)}{\|\bU - \bV\|_F^2}$ remains bounded as $\bU \to \bV$ because of the assumption 6) from the list of Assumptions \ref{ass: main}. 
\end{remark}

\begin{remark}
Assumption 5 always holds for the matrix completion problem from Example \ref{ex: mtrx_compl}. For the multi-class linear classification from Example \ref{ex: multi-class}, we have $\bA = \bX \otimes \bI_k$ and  assumption 5 holds if and only if $\sigma_n(\bX) > 0$. 
\end{remark}

\begin{example}
    $\psi(\bW) = \|\bW\|^p_{\text{Schatten}, p} + \nu\|\bW\|_F^2$ satisfies  all of the Assumptions \ref{ass: main} if $p \ge 2$. Hence, we can deduce exponential convergence in this case. 
\end{example}
\begin{example}
As for $2>p>1$, $\psi(\bW) = \|\bW\|^p_{\text{Schatten}, p} + \nu\|\bW\|_F^2$ satisfies assumptions 1-5 from the list of the Assumptions \ref{ass: main}. It also satisfies assumption 6  if and only if $\calB$ does not contain singular matrices. Thus, in general, we can deduce convergence for $2>p>1$ but cannot specify the rate. 
\end{example}

\section{Proofs}

\subsection{Proof of Convergence}

The following lemma is a matrix analog of Lemma 4.1 from \cite{beck2003mirror}:
\begin{lemma}\label{lm: beck2003}
    For any $\bS,\bU,\bV \in \bbR^{d \times k}$ and any $f: \bbR^{d \times k} \to \bbR$, the following identity holds:
    \begin{align*}
        & D_f(\bV, \bS) +  D_f(\bS, \bU) - D_f(\bV, \bU) = \\
        & \tr{\left((\nabla f(\bU) - \nabla f(\bS))^T(\bV -\bS)\right)}
    \end{align*}
\end{lemma}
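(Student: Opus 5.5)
This is a direct computation: expand the three Bregman divergences from the definition (Matrix Bregman Divergence) and cancel terms. The same algebra works for any differentiable $f$, so no convexity is needed.

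\textbf{Step 1: expand.} By definition,
\begin{align*}
D_f(\bV,\bS) &= f(\bV) - f(\bS) - \tr\left(\nabla f(\bS)^T(\bV-\bS)\right),\\
D_f(\bS,\bU) &= f(\bS) - f(\bU) - \tr\left(\nabla f(\bU)^T(\bS-\bU)\right),\\
D_f(\bV,\bU) &= f(\bV) - f(\bU) - \tr\left(\nabla f(\bU)^T(\bV-\bU)\right).
\end{align*}

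\textbf{Step 2: cancel the function values.} In $D_f(\bV,\bS)+D_f(\bS,\bU)-D_f(\bV,\bU)$, the terms $f(\bV)$, $f(\bS)$ and $f(\bU)$ each appear once with a plus sign and once with a minus sign, so they cancel. What remains is
$$-\tr\left(\nabla f(\bS)^T(\bV-\bS)\right) - \tr\left(\nabla f(\bU)^T(\bS-\bU)\right) + \tr\left(\nabla f(\bU)^T(\bV-\bU)\right).$$

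\textbf{Step 3: combine the $\nabla f(\bU)$ terms.} The trace is linear, and
$$(\bV-\bU)-(\bS-\bU) = \bV-\bS.$$
So the two $\nabla f(\bU)$ terms add up to $\tr\left(\nabla f(\bU)^T(\bV-\bS)\right)$.

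\textbf{Step 4: conclude.} Adding the remaining term $-\tr\left(\nabla f(\bS)^T(\bV-\bS)\right)$ gives
$$\tr\left((\nabla f(\bU)-\nabla f(\bS))^T(\bV-\bS)\right),$$
which is the claimed identity.

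The only point needing attention is the sign bookkeeping in the subtracted divergence $D_f(\bV,\bU)$; the rest is routine.
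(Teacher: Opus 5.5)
Your proposal is correct and is the same argument the paper relies on. The paper simply defers to the proof of Lemma 4.1 in \cite{beck2003mirror}, which is this direct expansion of the three divergences, cancellation of the function values, and combination of the $\nabla f(\bU)$ terms. Your remark that only differentiability of $f$ is needed is also apt, since the paper later applies the identity to $\calL_i$ and $\psi-\eta\calL_i$, which need not be strictly convex.
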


\begin{proof}
    The proof is identical to the proof of Lemma 4.1 from \cite{beck2003mirror}. 
\end{proof}

\begin{lemma}\label{lm: key_smd_identity}
    The following identity holds for any $\bW$ satisfying $\calA(\bW) = \bb$, where $\bW_i$ denote the iterates of the SMD algorithm, stochastic  loss functions $\calL_i: \bbR^{d \times k} \to \bbR$ and a learning rate $\eta>0$ satisfying Assumptions \ref{ass: main}. 
\begin{align*}
     & D_{\psi}(\bW,\bW_{i-1})   =  \eta D_{\calL_i}(\bW,\bW_{i-1}) +  \\ & D_\psi(\bW,\bW_i) + D_{\psi-\eta \calL_i}(\bW_i,\bW_{i-1}) +  \eta \calL_i(\bW_i)
\end{align*}
\end{lemma}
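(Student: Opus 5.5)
We prove the identity by applying Lemma \ref{lm: beck2003} with $f=\psi$, $\bV=\bW$, $\bS=\bW_i$, $\bU=\bW_{i-1}$, and then using the SMD update to rewrite the resulting cross term through $\calL_i$. Lemma \ref{lm: beck2003} gives
\begin{align*}
D_\psi(\bW,\bW_{i-1}) = D_\psi(\bW,\bW_i) + D_\psi(\bW_i,\bW_{i-1}) - \tr\left((\nabla\psi(\bW_{i-1})-\nabla\psi(\bW_i))^T(\bW-\bW_i)\right).
\end{align*}
The update rule says $\nabla\psi(\bW_{i-1})-\nabla\psi(\bW_i)=\eta\nabla\calL_i(\bW_{i-1})$. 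Hence the last term equals $-\eta\,\tr(\nabla\calL_i(\bW_{i-1})^T(\bW-\bW_i))$.

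Next we split $\bW-\bW_i=(\bW-\bW_{i-1})+(\bW_{i-1}-\bW_i)$ and express each piece through Bregman divergences of $\calL_i$, which is legitimate since the definition of $D_f$ only needs differentiability. From the definition,
\begin{align*}
-\tr\left(\nabla\calL_i(\bW_{i-1})^T(\bW-\bW_{i-1})\right) &= D_{\calL_i}(\bW,\bW_{i-1}) - \calL_i(\bW) + \calL_i(\bW_{i-1}),\\
\tr\left(\nabla\calL_i(\bW_{i-1})^T(\bW_i-\bW_{i-1})\right) &= \calL_i(\bW_i)-\calL_i(\bW_{i-1}) - D_{\calL_i}(\bW_i,\bW_{i-1}).
\end{align*}
Adding these and multiplying by $\eta$, the cross term becomes
\[
\eta D_{\calL_i}(\bW,\bW_{i-1}) - \eta\calL_i(\bW) + \eta\calL_i(\bW_i) - \eta D_{\calL_i}(\bW_i,\bW_{i-1}).
\]

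Now $\calL_i(\bW)=0$. The reason is that $\calA(\bW)=\bb$ makes every argument $\calA(\bW)_{i_j}-\bb_{i_j}$ zero, and $\ell_{i_j}(0)=0$ by Assumption 2. Finally, Bregman divergence is linear in the generating function, so
\[
D_\psi(\bW_i,\bW_{i-1})-\eta D_{\calL_i}(\bW_i,\bW_{i-1}) = D_{\psi-\eta\calL_i}(\bW_i,\bW_{i-1}).
\]
Collecting terms gives exactly the claimed identity.

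No step here is a real obstacle; the main care is sign bookkeeping in the two-way split of the cross term. Assumption 4 (convexity of $\psi-\eta\calL_i$) is not needed for the identity itself. It only guarantees that the term $D_{\psi-\eta\calL_i}(\bW_i,\bW_{i-1})$ is nonnegative, which matters when the identity is used later in the convergence proof.
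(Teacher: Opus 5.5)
Your proof is correct and follows essentially the paper's route. Both arguments use the same ingredients: the three-point identity (Lemma~\ref{lm: beck2003}) for $\psi$, the update rule to turn the cross term into $\eta\,\tr(\nabla\calL_i(\bW_{i-1})^T(\bW-\bW_i))$, rewriting that term through Bregman divergences of $\calL_i$, linearity of $D_f$ in $f$, and $\calL_i(\bW)=0$. The only difference is bookkeeping: the paper applies Lemma~\ref{lm: beck2003} a second time with $f=\calL_i$ and subtracts, passing through $\nabla\calL_i(\bW_i)$ and $D_{\calL_i}(\bW,\bW_i)$, whereas your split $\bW-\bW_i=(\bW-\bW_{i-1})+(\bW_{i-1}-\bW_i)$ reaches the same expression more directly. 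Your remark that Assumption 4 is not needed for the identity itself is also correct.
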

\begin{proof}   
Take an arbitrary $\bW \in \bbR^{d \times k}$. Using Lemma \ref{lm: beck2003}: 
\begin{align*}
     & D_{\psi}(\bW,\bW_i) + D_{\psi}(\bW_i,\bW_{i-1}) - D_{\psi}(\bW,\bW_{i-1})  \\=&  \tr{\left((\nabla \psi(\bW_{i-1}) - \nabla \psi(\bW_i))^T(\bW -\bW_i)\right)}
\end{align*}
Incorporating the definition of the SMD update, we arrive at:
\begin{align}\label{eq: psi_interm}
     & D_{\psi}(\bW,\bW_i) + D_{\psi}(\bW_i,\bW_{i-1}) - D_{\psi}(\bW,\bW_{i-1})  \\= & \eta\tr{\left((\nabla \calL_i(\bW_{i-1}))^T(\bW -\bW_i)\right)}
\end{align}
We also have from Lemma \ref{lm: beck2003} applied to $f = \calL_i$:
\begin{align}\label{eq: loss_interm}
     & \eta D_{\calL_i}(\bW,\bW_i) + \eta D_{\calL_i}(\bW_i,\bW_{i-1}) - \eta D_{\calL_i}(\bW,\bW_{i-1}) \nonumber \\ = & \eta\tr{\left((\nabla \calL_i(\bW_{i-1}) - \nabla \calL_i(\bW_i))^T(\bW -\bW_i)\right)}
\end{align}
Subtracting \eqref{eq: loss_interm} from \eqref{eq: psi_interm} we obtain: 
\begin{align}\label{eq: subtr_result}
     & D_{\psi -\eta \calL_i}(\bW,\bW_i) + D_{\psi-\eta \calL_i}(\bW_i,\bW_{i-1}) \nonumber \\ & - D_{\psi-\eta \calL_i}(\bW,\bW_{i-1})   = \eta\tr{\left(\nabla \calL_i(\bW_i))^T(\bW -\bW_i)\right)}
\end{align}
Equation \eqref{eq: subtr_result} is equivalent to:
\begin{align*}
     & D_{\psi}(\bW,\bW_{i-1})   =  \eta D_{\calL_i}(\bW,\bW_{i-1}) + D_{\psi -\eta \calL_i}(\bW,\bW_i)  \nonumber \\& +D_{\psi-\eta \calL_i}(\bW_i,\bW_{i-1}) -  \eta\tr{\left(\nabla \calL_i(\bW_i))^T(\bW -\bW_i)\right)}
\end{align*}
Opening up the $D_{\psi -\eta \calL_i}(\bW,\bW_i)$ term further:
\begin{align*}
     & D_{\psi}(\bW,\bW_{i-1})   =  \eta D_{\calL_i}(\bW,\bW_{i-1}) + D_\psi(\bW,\bW_i) \\ &  - \eta D_{\calL_i}(\bW,\bW_i) 
      +D_{\psi-\eta \calL_i}(\bW_i,\bW_{i-1}) \\& -  \eta\tr{\left(\nabla \calL_i(\bW_i))^T(\bW -\bW_i)\right)}
\end{align*}
Grouping the terms having $\eta$ in front:
\begin{align*}
     & D_{\psi}(\bW,\bW_{i-1})   =  \eta D_{\calL_i}(\bW,\bW_{i-1}) + D_\psi(\bW,\bW_i) \\
     & +D_{\psi-\eta \calL_i}(\bW_i,\bW_{i-1})   \\
     & -\eta \left(D_{\calL_i}(\bW,\bW_i) +\tr{\left(\nabla \calL_i(\bW_i))^T(\bW -\bW_i)\right)}\right)
\end{align*}
By definition of $D_{\calL_i}(\bW,\bW_i)$ we arrive at:
\begin{align*}
     & D_{\psi}(\bW,\bW_{i-1})   =  \eta D_{\calL_i}(\bW,\bW_{i-1}) + D_\psi(\bW,\bW_i)  \\
     &  +D_{\psi-\eta \calL_i}(\bW_i,\bW_{i-1}) -  \eta \left(\calL_i(\bW) - \calL_i(\bW_i)\right)
\end{align*}

We are now prepared to show convergence:
\begin{proof}
Assuming $\bW$ interpolates all data, i.e. $\calL_i(\bW) = 0$ for all $i$, we obtain the matrix analog of Lemma 6 from \cite{azizan2021stochastic} for any $\bW$ from the interpolating manifold:
\begin{align*}
     & D_{\psi}(\bW,\bW_{i-1})   =  \eta D_{\calL_i}(\bW,\bW_{i-1})  \\ & + D_\psi(\bW,\bW_i) + D_{\psi-\eta \calL_i}(\bW_i,\bW_{i-1}) +  \eta \calL_i(\bW_i)
\end{align*}
 \end{proof}

We are ready to prove convergence now. Note that Lemma \ref{lm: key_smd_identity} implies that 
$$D_{\psi}(\bW, \bW_{t-1}) \ge D_{\psi}(\bW, \bW_{t}) + \eta \calL_i(\bW_i)$$ for all $i=1,\dots, T$. Summing over $i=1,\dots,T$, we have: 
$$\sum_{i=1}^T D_{\psi}(\bW, \bW_{t-1}) \ge \sum_{i=1}^T  D_{\psi}(\bW, \bW_{t}) + \eta \sum_{i=1}^T  \calL_i(\bW_i)$$
Hence, $$D_{\psi}(\bW, \bW_{0}) \ge D_{\psi}(\bW, \bW_{T}) + \eta \sum_{i=1}^T  \calL_i(\bW_i)$$
Therefore, we see that $\calL_T(\bW_T) \to 0$ as $T \to \infty$, implying that $\calA(\bW_T) \to \bb$ as $T \to \infty$ and thus all SMD updates $\nabla \calL_T(\bW_T) \to 0$ as well, implying convergence to some point $\bW_{\infty}$.

\subsection{Implicit Bias}
Summing up the SMD iterations, we note that
\begin{align*}
    \nabla \psi(\bW_t) - \nabla \psi(\bW_0) =  \sum_{s=1}^t \eta  \nabla \calL_s (\bW_s) 
\end{align*}
Consider the following optimization problem, whose solution $\bW$ is unique due to strong convexity:
\begin{align*}
    \min_{\bW \in \bbR^{d \times k}} D_{\psi} (\bW, \bW_0) \\
    s.t \quad \calA(\bW) = \bb
\end{align*}
Using a Lagrange multiplier $\bm{\lambda} \in \bbR^p$:
\begin{align*}
    \min_{\bW \in \bbR^{d \times k}} \max_{\blambda \in \bbR^{p}}D_{\psi} (\bW, \bW_0) + \blambda^T (\calA(\bW) - \bb)
 \end{align*}
We compute the stationary conditions of KKT (the solution to this is unique as well):
\begin{align}\label{eq: KKT}
    \begin{cases}
        \nabla \psi(\bW) - \nabla \psi(\bW_0) = \bA^T\blambda \\
      \calA(\bW) = \bb
    \end{cases}
\end{align}
where $\bA$ is the matrix representation of $\calA$ such that $\calA(\bW) = \bA \text{vec}(\bW)$.

Now considering $$\calL_s(\bW):= \frac{1}{B}\sum_{ij=1}^{B}\ell_{i_j}  \left(\calA(\bW)_{i_j}-\bb_{i_j}\right),$$ 
we have for the SMD iterations:
\begin{align*}
    & \nabla \psi(\bW_t) - \nabla \psi(\bW_0) = \eta \sum_{s=1}^t \nabla \calL_s(\bW_s) \\
    &= \eta \sum_{s=1}^t \frac{1}{B}\sum_{j=1}^{B}\ell'_{i_j}  \left(\calA(\bW_s)_{i_j}-\bb_{i_j}\right) \nabla \calA(\bW)_{i_j} = \bA^T \bmu_t
\end{align*}
for some $\bmu_t \in \bbR^{p}$ because every $\nabla \calA(\bW)_{i_j}$ belongs in the span of $\ba_1,\dots,\ba_p$.

Now, assume that the SMD with constant step-size converges to some point $\bW_{\infty} \in \bbR^{d \times k}$. 
This implies that $\nabla \calL(\bW_{\infty}) = 0$, which by assumption implies $\calA(\bW_{\infty}) = \bb$. We also observe that $\nabla \psi(\bW_t) - \nabla \psi(\bW_0) = \bA^T \bmu_t$ for all $t \in \bbN$. Thus, taking $\blambda := \bmu_\infty$, we observe that $\bW_{\infty}$ satisfies the KKT conditions \eqref{eq: KKT}, which are assumed to yield a unique solution $\bW_*$.

\end{proof}

\subsection{Convergence Rate}

The proof below was inspired by the proofs provided in \cite{Varma2025Exponential} and \cite{d2023stochastic}. 

\begin{lemma}\label{lm: conv_rate_start}
    The following holds:
    \begin{align*}
        D_{\psi}(\bW_*, \bW_{t-1}) \ge \eta D_{\calL_t}(\bW_*, \bW_{t-1}) + D_{\psi}(\bW_*, \bW_{t})
    \end{align*}
\end{lemma}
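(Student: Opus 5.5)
The plan is to specialize the identity of Lemma \ref{lm: key_smd_identity} to $\bW = \bW_*$ and at step $i = t$, and then discard the two remaining terms because they are nonnegative. Since $\bW_*$ is feasible for \eqref{opt:impl}, we have $\calA(\bW_*) = \bb$, so it is an admissible choice of $\bW$ in Lemma \ref{lm: key_smd_identity}. Moreover, by Assumption 2 (each $\ell_i$ attains its minimum $0$ at $0$), $\calL_t(\bW_*) = 0$. The lemma then gives
\begin{align*}
D_{\psi}(\bW_*,\bW_{t-1}) &= \eta D_{\calL_t}(\bW_*,\bW_{t-1}) + D_\psi(\bW_*,\bW_t) \\
&\quad + D_{\psi-\eta\calL_t}(\bW_t,\bW_{t-1}) + \eta\,\calL_t(\bW_t).
\end{align*}

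It remains to show that the last two terms are nonnegative. First, $\calL_t(\bW_t) \ge 0$ because every $\ell_{i}$ is nonnegative by Assumption 2, and $\eta>0$. Second, by Assumption 4 the function $\psi - \eta\calL_t$ is convex. The Bregman divergence of a differentiable convex function $f$ is nonnegative, since $D_f(\bU,\bV) = f(\bU) - f(\bV) - \tr(\nabla f(\bV)^T(\bU-\bV)) \ge 0$ is exactly the first-order characterization of convexity. Hence $D_{\psi-\eta\calL_t}(\bW_t,\bW_{t-1}) \ge 0$. Dropping both terms yields the claimed inequality.

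The only point that needs a little care is differentiability of $\psi - \eta\calL_t$, which is required for its Bregman divergence (and Lemma \ref{lm: key_smd_identity} itself) to make sense. This follows because $\psi$ is differentiable by Assumption 1, and $\calL_t$ is a finite average of the differentiable $\ell_{i_j}$ composed with affine maps. Otherwise the argument is a direct specialization of the earlier identity, so I expect no real obstacle. The substantive work, namely lower bounding $D_{\calL_t}(\bW_*,\bW_{t-1})$ via strong convexity of $\ell_i$ and $\sigma_p(\bA)$, is deferred to the subsequent steps of the rate proof.
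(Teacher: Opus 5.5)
Your proposal is correct and follows the paper's approach: the paper says only that the inequality follows from the identity of Lemma \ref{lm: key_smd_identity} at $\bW = \bW_*$. You supply the steps it leaves implicit, namely $\calL_t(\bW_*) = 0$ by feasibility, $\eta\,\calL_t(\bW_t) \ge 0$ by nonnegativity of the $\ell_i$, and $D_{\psi-\eta\calL_t}(\bW_t,\bW_{t-1}) \ge 0$ by the convexity in Assumption 4.
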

\begin{proof}
Follows from 
\begin{align*}
     &D_{\psi}(\bW,\bW_{i-1})   =  \eta D_{\calL_i}(\bW,\bW_{i-1}) + D_\psi(\bW,\bW_i) \\ & + D_{\psi-\eta \calL_i}(\bW_i,\bW_{i-1}) +  
      \eta \calL_i(\bW_i) 
\end{align*}  
\end{proof}

\begin{lemma}
    Let $\bW_* - \bW_{t-1}= \bP +   \bP^\perp$, where $\text{vec}(\bP) \in \text{range}(\bA^T)$ and $\bA\text{vec}(\bP^\perp) = 0$. Then $$\calA(\bW_{t-1} + \bP) = \bb$$
\end{lemma}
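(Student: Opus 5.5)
The claim is a one-line consequence of linearity of $\calA$ together with the fact that $\bW_*$ is feasible. I will work throughout with the vectorized form $\calA(\bW) = \bA\,\textbf{vec}(\bW)$, where $\bA = (\ba_1,\dots,\ba_p)^T$ is the matrix representation of $\calA$ introduced with the linear constraint system.

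First I would record that the decomposition exists. Since $\bbR^{dk} = \text{range}(\bA^T) \oplus \ker(\bA)$ is an orthogonal direct sum, the vector $\textbf{vec}(\bW_* - \bW_{t-1})$ splits uniquely as $\textbf{vec}(\bP) + \textbf{vec}(\bP^\perp)$. Here $\textbf{vec}(\bP)$ is the orthogonal projection onto $\text{range}(\bA^T)$, and $\textbf{vec}(\bP^\perp)$ is the projection onto $\ker(\bA)$. Reshaping gives the matrices $\bP$ and $\bP^\perp$. The lemma takes such a decomposition as given, so strictly only the kernel property of $\bP^\perp$ is needed below.

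Next I compute directly. By linearity,
\[
\calA(\bW_{t-1} + \bP) = \bA\,\textbf{vec}(\bW_{t-1}) + \bA\,\textbf{vec}(\bP).
\]
Substituting $\textbf{vec}(\bP) = \textbf{vec}(\bW_* - \bW_{t-1}) - \textbf{vec}(\bP^\perp)$ turns this into
\[
\bA\,\textbf{vec}(\bW_*) - \bA\,\textbf{vec}(\bP^\perp).
\]
The second term vanishes because $\bA\,\textbf{vec}(\bP^\perp) = 0$. The first term equals $\bb$, because $\bW_*$ is by definition feasible for \eqref{opt:impl}, that is, $\calA(\bW_*) = \bb$. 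Hence $\calA(\bW_{t-1} + \bP) = \bb$.

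There is no genuine obstacle. The only point to flag is that feasibility of $\bW_*$ is used. Existence and uniqueness of $\bW_*$ come from strong convexity of $\psi$ together with nonemptiness of the affine constraint set, and the latter is guaranteed by assumption 5 ($\sigma_p(\bA)>0$, so $\bA$ is surjective onto $\bbR^p$). I would also note, for the subsequent rate argument, that $\bP$ is the minimal-Frobenius-norm correction bringing $\bW_{t-1}$ onto the interpolating manifold. Since $\textbf{vec}(\bP) \in \text{range}(\bA^T)$, we get $\|\bA\,\textbf{vec}(\bP)\|_2 \ge \sigma_p(\bA)\|\bP\|_F$. This is presumably how $\sigma_p(\bA)^2$ enters \eqref{eq: exp_conv}.
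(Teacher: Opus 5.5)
Your proof is correct and matches the paper's: both use $\calA(\bW_*) = \bb$ together with $\bA\,\textbf{vec}(\bP^\perp) = 0$ to get $\calA(\bW_{t-1} + \bP) = \calA(\bW_* - \bP^\perp) = \bb$. Your side remarks are also accurate and anticipate how the lemma is used in the rate proof: the decomposition exists and is unique, feasibility of $\bW_*$ follows from surjectivity of $\bA$, and $\|\bA\,\textbf{vec}(\bP)\|_2 \ge \sigma_p(\bA)\|\bP\|_F$.
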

\begin{proof}
    Since $\calA(\bW_*) = \bb$ and $\bA\text{vec}(\bP^\perp) = 0$ by definition, we have $$\calA(\bW_{t-1} + \bP) = \calA(\bW_* - \bP^\perp) = \bb$$ 
\end{proof}

\begin{lemma}
    The following holds:
    \begin{align*}
        \left(1-\frac{\eta\mu\sigma_p(\bA)^2}{2pL}\right)\bbE D_{\psi}(\bW_*, \bW_{t-1}) \ge  \bbE D_{\psi}(\bW_*, \bW_{t})
    \end{align*}
\end{lemma}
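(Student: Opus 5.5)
Starting from Lemma \ref{lm: conv_rate_start}, it suffices to show
\[
\bbE_t\, D_{\calL_t}(\bW_*,\bW_{t-1}) \ge \frac{\mu\sigma_p(\bA)^2}{2pL}\, D_\psi(\bW_*,\bW_{t-1}),
\]
where $\bbE_t$ is the expectation over the batch at step $t$ conditional on the past. Taking full expectations and rearranging Lemma \ref{lm: conv_rate_start} then gives the claim. Since each $\calL_t$ is a composition of a convex function with an affine map, $D_{\calL_t}$ is linear in $\calL_t$. Assumption 3 therefore gives $\bbE_t D_{\calL_t}(\bW_*,\bW_{t-1}) = D_{\calL}(\bW_*,\bW_{t-1})$, and the problem reduces to a deterministic inequality for the full loss $\calL$.

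Next I lower bound $D_\calL(\bW_*,\bW_{t-1})$. Write $\bW_*-\bW_{t-1} = \bP+\bP^\perp$ as in the preceding lemma. Since $\calL$ depends on $\bW$ only through $\calA(\bW)$, we have $D_\calL(\bW_*,\bW_{t-1}) = \frac1p\sum_i D_{\ell_i}(\bb_i - r_i + (\bA\mathrm{vec}(\bP))_i,\, \bb_i - r_i)$ with residuals $r_i$. By $\mu$-strong convexity of each $\ell_i$ (reading "$\mu$-strictly convex" as strong convexity), each term is at least $\frac{\mu}{2}(\bA\mathrm{vec}(\bP))_i^2$. Hence
\[
D_\calL(\bW_*,\bW_{t-1}) \ge \frac{\mu}{2p}\|\bA\mathrm{vec}(\bP)\|_2^2 \ge \frac{\mu\sigma_p(\bA)^2}{2p}\|\bP\|_F^2,
\]
using that $\mathrm{vec}(\bP)\in\mathrm{range}(\bA^T)$, on which $\bA$ has smallest singular value $\sigma_p(\bA)>0$ by assumption 5.

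The main obstacle is comparing $\|\bP\|_F^2$ with $D_\psi(\bW_*,\bW_{t-1})$. Note that $\|\bP\|_F$ is only the row-space component of the error, while $D_\psi$ sees the whole error. My plan is to use that $\bW_{t-1}+\bP$ is an interpolator (previous lemma) together with the optimality of $\bW_*$. Lemma \ref{lm: beck2003} with $\bV=\bW_*$, $\bS=\bW_{t-1}+\bP$, $\bU=\bW_{t-1}$ gives
\[
D_\psi(\bW_*,\bW_{t-1}) = D_\psi(\bW_*,\bW_{t-1}+\bP) + D_\psi(\bW_{t-1}+\bP,\bW_{t-1}) - \mathrm{tr}\bigl((\nabla\psi(\bW_{t-1})-\nabla\psi(\bW_{t-1}+\bP))^T(\bW_*-\bW_{t-1}-\bP)\bigr).
\]
The implicit-bias argument shows $\nabla\psi(\bW_{t-1})-\nabla\psi(\bW_0)\in\mathrm{range}(\bA^T)$. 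Combined with the KKT conditions \eqref{eq: KKT}, this gives $\nabla\psi(\bW_*)-\nabla\psi(\bW_{t-1})\in\mathrm{range}(\bA^T)$, which is orthogonal to $\bP^\perp$. Hence $\bW_*$ is the Bregman projection of $\bW_{t-1}$ onto the interpolating affine set, and in particular
\[
D_\psi(\bW_*,\bW_{t-1}) \le D_\psi(\bW_{t-1}+\bP,\bW_{t-1}).
\]
This follows by applying Lemma \ref{lm: beck2003} with $\bS=\bW_*$: the cross term vanishes and $D_\psi(\bW_{t-1}+\bP,\bW_*)\ge0$.

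It remains to bound $D_\psi(\bW_{t-1}+\bP,\bW_{t-1}) \le 2L\|\bP\|_F^2$, which I would take from the definition of $L$. This needs both points in $\calB$, or at least in a region where the curvature bound of assumption 6 applies. We have $\bW_{t-1}\in\calB$ because $D_\psi(\bW_*,\bW_t)$ is nonincreasing, by Lemma \ref{lm: conv_rate_start}. Justifying that $\bW_{t-1}+\bP$ also lies in $\calB$, or enlarging $\calB$ convexly, is the delicate point; the factor $2$ should absorb the slack. Chaining the bounds gives $D_\calL \ge \frac{\mu\sigma_p(\bA)^2}{2pL}D_\psi(\bW_*,\bW_{t-1})$ up to that constant, which completes the argument.
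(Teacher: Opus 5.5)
\textbf{Verdict: the proposal does not yet prove the statement as written, and the main fix is simply to drop the factor $2$.}

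Your chain of inequalities is the same as the paper's. You reduce to the full loss via assumption 3, use strong convexity of the $\ell_i$, and apply $\sigma_p(\bA)$ to the row-space part $\bP$. You then use the definition of $L$ and finish with the fact that $\bW_*$ is the Bregman projection of $\bW_{t-1}$ onto $\{\calA(\bW)=\bb\}$. Your derivation of that last fact, via $\nabla\psi(\bW_*)-\nabla\psi(\bW_{t-1})\in\mathrm{range}(\bA^T)$ and Lemma~\ref{lm: beck2003} with $\bS=\bW_*$, is correct and more explicit than the paper's one-line appeal.

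The gap is in the last step. By bounding $D_\psi(\bW_{t-1}+\bP,\bW_{t-1})\le 2L\|\bP\|_F^2$, you end with the constant $\frac{\mu\sigma_p(\bA)^2}{4pL}$. That gives the contraction factor $1-\frac{\eta\mu\sigma_p(\bA)^2}{4pL}$, and ``up to that constant'' is not what the lemma claims. Nothing justifies the factor $2$:
\begin{itemize}
\item If $\bW_{t-1}+\bP$ and $\bW_{t-1}$ both lie in $\calB$, the definition of $L$ gives $D_\psi(\bW_{t-1}+\bP,\bW_{t-1})\le L\|\bP\|_F^2$ directly. The chain then yields exactly $\frac{\mu\sigma_p(\bA)^2}{2pL}$, and this is precisely the step the paper takes.
\item If $\bW_{t-1}+\bP\notin\calB$, the definition of $L$ says nothing. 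Enlarging $\calB$ does not cost ``at most a factor $2$'': the ratio $D_\psi(\bU,\bV)/\|\bU-\bV\|_F^2$ over a larger set is not controlled by $L$ at all. For Schatten-$p$ mirrors with $1<p<2$ it blows up near singular matrices, which is why assumption 6 is localized.
\end{itemize}

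On membership in $\calB$, you have found a real weak point, but the paper shares it. Its proof applies the definition of $L$ to the pair $(\bW_{t-1}+\bP,\bW_{t-1})$ without checking that $\bW_{t-1}+\bP\in\calB$. Membership of $\bW_{t-1}$ is fine pathwise, as you note, by the monotonicity in Lemma~\ref{lm: conv_rate_start}. For $\bW_{t-1}+\bP$, Lemma~\ref{lm: beck2003} gives $D_\psi(\bW_*,\bW_{t-1}+\bP)=D_\psi(\bW_*,\bW_{t-1})-D_\psi(\bW_{t-1}+\bP,\bW_{t-1})+\tr\bigl((\nabla\psi(\bW_{t-1})-\nabla\psi(\bW_{t-1}+\bP))^T\bP^\perp\bigr)$. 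The cross term has no sign in general, because $\nabla\psi(\bW_{t-1})-\nabla\psi(\bW_{t-1}+\bP)$ need not lie in $\mathrm{range}(\bA^T)$. In the Euclidean case $\bP^\perp=0$ and the issue disappears.

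To obtain the statement as written, drop the factor $2$. Then either accept this membership as the paper implicitly does, or state explicitly that you need the bound defining $L$ on a set containing $\bW_{t-1}+\bP$. The remark that the factor $2$ ``absorbs the slack'' should be removed.
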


\begin{proof}
    It suffices to show the following due to Lemma \ref{lm: conv_rate_start}:
    \begin{align*}
        \eta\bbE D_{\calL_t}(\bW_*, \bW_{t-1}) \ge \frac{\eta\mu\sigma_p(\bA)^2}{2pL}D_{\psi}(\bW_*, \bW_{t-1})
    \end{align*}
    Since the expectation is taken over the randomness in the SMD batch, the latter is equivalent to:
     \begin{align*}
        D_{\calL}(\bW_*, \bW_{t-1}) \ge \frac{\mu\sigma_p(\bA)^2}{2pL}D_{\psi}(\bW_*, \bW_{t-1})
    \end{align*}

By strong convexity of $\ell_{i}$, $D_{\calL}(\bW_*, \bW_{t-1}) = $
     \begin{align*}
        &   \frac{1}{p}\sum_{i=1}^p  
         D_{\ell_{i}}(\calA(\bW_*)_i -\bb_i, \calA(\bW_{t-1})_i - \bb_i)  \ge \\
         &  \frac{1}{p}\sum_{i=1}^p  
         \frac{\mu}{2} (\calA(\bW_*)_i-\calA(\bW_{t-1})_i)^2 \\
         & =  \frac{\mu}{2p} \|\calA(\bW_*)-\calA(\bW_{t-1})\|_2^2  \\ & 
         = \frac{\mu}{2p} \|\bA\textbf{vec}(\bP)\|_2^2  \ge \frac{\mu\sigma_p(\bA)^2}{2p} \|\bP \|_F^2   \\ 
         & = \frac{\mu\sigma_p(\bA)^2}{2p} \|\bP + \bW_{t-1} - \bW_{t-1} \|_F^2  \\
         & \ge \frac{\mu\sigma_p(\bA)^2}{2pL}D_{\psi}(\bP + \bW_{t-1}, \bW_{t-1})  \\
         & \ge \frac{\mu\sigma_p(\bA)^2}{2pL}D_{\psi}(\bW_*, \bW_{t-1}) 
    \end{align*}
    Note that in the last line above we used that $\calA(\bP + \bW_{t-1}) = \bb$ and that $\bW_*$ minimizes $D_{\psi}(\bW, \bW_{t-1})$ with respect to the constraint $\calA(\bW) = \bb$. 
\end{proof}

\section{Experimental Setup}

We consider the problem of recovering a low-rank matrix $\bM \in \bbR^{n \times m}$ from a subset of its entries. The true matrix is generated as $\bM = \bU\bV^T$ where $\bU \in \bbR^{n \times r}$ and $\bV \in \bbR^{m \times r}$ have i.i.d. Gaussian entries scaled by $1/\sqrt{r}$, ensuring rank $r$. We observe each entry independently with probability $\textit{prob}$, yielding the observation set $\Omega$ and the partially observed matrix $\bM_\Omega$.

Since the common assumption regarding $\bM$ is low-rankness, approaches to the matrix completion problem usually solve the following objective:
\begin{align}
\min_{\bW} \quad & \|\bW\|_* \\
\text{s.t.} \quad & \bW_{ij} = \bM_{ij}, \quad (i,j) \in \Omega \nonumber
\end{align}
where $\|\cdot\|_*$ denotes the nuclear norm (sum of singular values).

\subsection{Methods}

We compare three algorithms for low-rank matrix completion driven by singular value shrinkage.

\begin{definition}[Singular Value Soft-Thresholding]
For a matrix with singular value decomposition $\bW = \bU \textbf{diag}(\sigma)\bV^T$  the soft-thresholding operator $\mathcal{S}_\tau$ is defined as:
$$\mathcal{S}_\tau(\bW) = \bU \cdot \textbf{diag}(\max(\sigma_i - \tau, 0)) \cdot \bV^T$$
where $\tau > 0$ is the threshold parameter.
\end{definition}

\textbf{1. Singular Value Thresholding (SVT) \cite{cai2010singular}} maintains an auxiliary matrix $\bY_t$ and iterates
\begin{align}
\bW_t &= \mathcal{S}_\tau(\bY_{t-1}) \\
\bY_t &= \bY_{t-1} + \delta \, \mathcal{P}_{\Omega}(\bM - \bW_t),
\end{align}
where $\mathcal{P}_{\Omega}$ is the projection onto observed entries, i.e. $\mathcal{P}_{\Omega}$ keeps the values for the entries in $\Omega$ and sets the rest to zero. 

\textbf{2. Soft-Impute \cite{mazumder2010spectral}} iterates as 
\begin{align}
(\bZ_t)_{ij} &= \begin{cases} 
\bM_{ij} & \text{if } (i,j) \in \Omega \\
(\bW_t)_{ij} & \text{otherwise}
\end{cases} \\
\bW_{t+1} &= \mathcal{S}_\lambda(\bZ_t),
\end{align}

\textbf{3. Schatten-$p$ Mirror Descent:} Our proposed method using the mirror $\psi(\bW) = \|\bW\|_{\text{Schatten}, p}^p$ with $p$ slightly above 1 (we use $p=1.05$). 
\subsection{Results}

We evaluate all methods on $100 \times 100$ matrices of rank $5$, varying the sampling probability from $0.1$ to $0.9$ in increments of $0.1$. Each method runs for 200 iterations.

For the SVT method we use step size $\delta = 0.8$; the SVT shrinkage parameter is set to the paper-style default $\tau = 5\max(n,m)$. For Soft-Impute we use $\lambda = 1.0$. For Schatten-$p$ SMD we use $p=1.05$ and learning rate $\eta = 50$, with the gradient normalized by $|\Omega|$.

Figure 1 shows the relative Frobenius norm error $\frac{\|\bW - \bM\|_F} {\|\bM\|_F}$ as a function of sampling probability. The Schatten-$p$ mirror descent consistently outperforms both thresholding methods across all sampling rates, with the advantage most pronounced at lower sampling probabilities where the problem is most challenging.

\begin{figure}[t]
  \centering
  \includegraphics[width=1.0\linewidth]{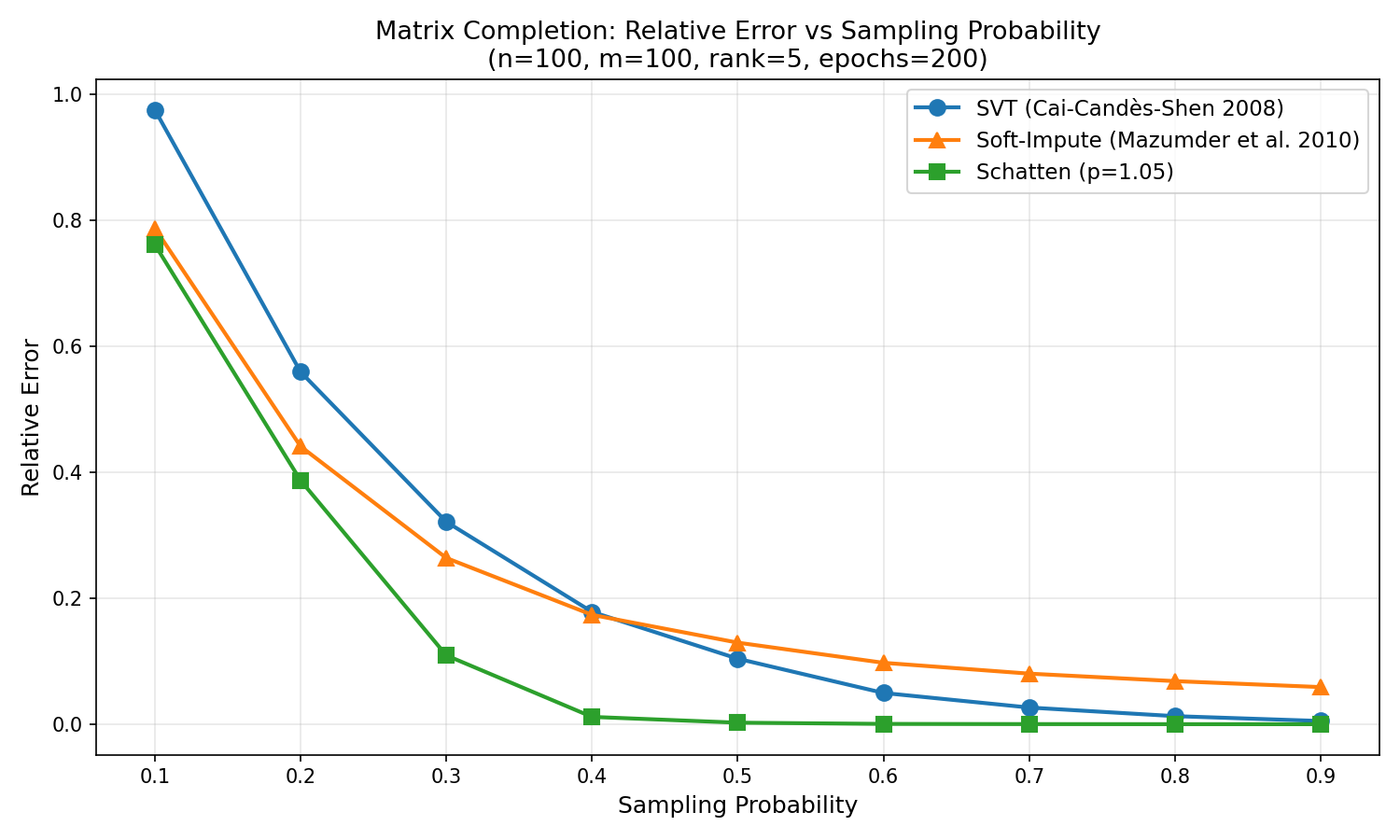}
  \caption{Relative recovery error versus sampling probability for SVT \cite{cai2010singular}, Soft-Impute \cite{mazumder2010spectral}, and Schatten-$p$ SMD.}
  \label{fig:completion_error_vs_sampling}
\end{figure}


\section{Conclusion}

This paper extends the theory of stochastic mirror descent to matrix parameters and vector-valued outputs, providing both theoretical guarantees and practical benefits. 

Our experiments on matrix completion validate the practical value of this framework. Schatten-$p$ mirror descent with $p \approx 1$  outperforms standard proximal methods based on singular value thresholding ran with the same number of epochs, particularly in challenging low-sampling regimes, by naturally inducing low-rank structure through the geometry of the mirror map rather than explicit constraints. 

While our analysis establishes convergence for SMD with $1 < p < 2$, proving exponential rates in this regime requires relaxing assumption 6 in the list of Assumptions \ref{ass: main}. This remains an important direction for future work.

\newpage

\bibliographystyle{IEEEtran}
\bibliography{sample}

\end{document}

%% file: symbols.tex
\newcommand{\bGamma}{\bm{\Gamma}}
\newcommand{\bDelta}{\bm{\Delta}}
\newcommand{\bLambda}{\boldsymbol{\Lambda}}
\newcommand{\bSigma}{\mathbf{\Sigma}}
\newcommand{\bOmega}{\bm{\Omega}}
\newcommand{\bPsi}{\bm{\Psi}}
\newcommand{\bPhi}{\bm{\Phi}}
\newcommand{\tTheta}{\tilde{\Theta}}
\newcommand{\blambda}{\boldsymbol{\lambda}}

\newcommand{\clip}{\text{clip}}
\newcommand{\sign}{\text{sign}}
\newcommand{\prox}{\text{prox}}

\newcommand{\rk}[1]{\operatorname{rk}(#1)}
\newcommand{\hA}{\hat{\bA}} 
\newcommand{\balpha}{\bm{\alpha}}
\newcommand{\bbeta}{\bm{\beta}}
\newcommand{\bdelta}{\bm{\delta}}
\newcommand{\bomega}{\bm{\omega}}
\newcommand{\bgamma}{\bm{\gamma}}
\newcommand{\tbOmega}{\tilde{\bOmega}}
\newcommand{\bepsilon}{\bm{\epsilon}}
\newcommand{\btheta}{\bm{\theta}}
\newcommand{\bphi}{\bm{\phi}}
\newcommand{\bvarphi}{\bm{\varphi}}
\newcommand{\bpsi}{\bm{\psi}}
\newcommand{\bmeta}{\bm{\eta}}
\newcommand{\bzeta}{\bm{\zeta}}
\newcommand{\bmu}{\bm{\mu}}
\newcommand{\bnu}{\bm{\nu}}
\newcommand{\bpi}{\bm{\pi}}
\newcommand{\bsigma}{\mathbf{\sigma}}
\newcommand{\boldeta}{\boldsymbol{\eta}}
\newcommand{\tbx}{\bm{\tilde{x}}}
\newcommand{\tbw}{\bm{\tilde{w}}}
\newcommand{\tbS}{\bm{\tilde{S}}}

\newcommand{\bargamma}{\bar{\gamma}}
\newcommand{\bartheta}{\bar{\theta}}
\newcommand{\barphi}{\bar{\phi}}
\newcommand{\tSigz}{\tilde{\Sigma}_Z}

\newcommand{\tilDelta}{\tilde{\Delta}}
\newcommand{\tlDelta}{\tilde{\Delta}}
\newcommand{\tlPhi}{\tilde{\Phi}}

\newcommand{\tlepsilon}{\tilde{\epsilon}}
\newcommand{\tldelta}{\tilde{\delta}}
\newcommand{\tltheta}{\tilde{\theta}}
\newcommand{\tlgamma}{\tilde{\gamma}}
\newcommand{\tlpsi}{\tilde{\psi}}
\newcommand{\tlrho}{\tilde{\rho}}

\newcommand{\bA}{\mathbf{A}}
\newcommand{\bB}{\mathbf{B}}
\newcommand{\bC}{\mathbf{C}}
\newcommand{\bD}{\mathbf{D}}
\newcommand{\bE}{\mathbf{E}}
\newcommand{\bF}{\mathbf{F}}
\newcommand{\bG}{\mathbf{G}}
\newcommand{\bH}{\mathbf{H}}
\newcommand{\bI}{\mathbf{I}}
\newcommand{\bJ}{\mathbf{J}}
\newcommand{\bL}{\mathbf{L}}
\newcommand{\bM}{\mathbf{M}}
\newcommand{\bN}{\mathbf{N}}
\newcommand{\bP}{\mathbf{P}}
\newcommand{\bQ}{\mathbf{Q}}
\newcommand{\bR}{\mathbf{R}}
\newcommand{\bS}{\mathbf{S}}
\newcommand{\bT}{\mathbf{T}}
\newcommand{\bU}{\mathbf{U}}
\newcommand{\bV}{\mathbf{V}}
\newcommand{\bW}{\mathbf{W}}
\newcommand{\bX}{\mathbf{X}}
\newcommand{\bY}{\mathbf{Y}}
\newcommand{\bZ}{\mathbf{Z}}

\newcommand{\ba}{\mathbf{a}}
\newcommand{\bb}{\mathbf{b}}
\newcommand{\bc}{\mathbf{c}}
\newcommand{\bd}{\mathbf{d}}
\newcommand{\be}{\mathbf{e}}
\newcommand{\mbf}{\mathbf{f}}
\newcommand{\bg}{\mathbf{g}}
\newcommand{\bh}{\mathbf{h}}
\newcommand{\bl}{\mathbf{l}}
\newcommand{\bbm}{\bm{m}}
\newcommand{\bn}{\mathbf{n}}
\newcommand{\bp}{\mathbf{p}}
\newcommand{\bq}{\mathbf{q}}
\newcommand{\br}{\mathbf{r}}
\newcommand{\bs}{\mathbf{s}}
\newcommand{\bt}{\mathbf{t}}
\newcommand{\bu}{\mathbf{u}}
\newcommand{\bv}{\mathbf{v}}
\newcommand{\bw}{\mathbf{w}}
\newcommand{\bx}{\mathbf{x}}
\newcommand{\by}{\mathbf{y}}
\newcommand{\bz}{\mathbf{z}}

\newcommand{\bTheta}{\bm{\Theta}}

\newcommand{\hbeta}{\hat{\beta}}
\newcommand{\hepsilon}{\hat{\epsilon}}
\newcommand{\htheta}{\hat{\theta}}
\newcommand{\hsigma}{\hat{\sigma}}
\newcommand{\hmu}{\hat{\mu}}
\newcommand{\btau}{\bm{\tau}}
\newcommand{\bxi}{\bm{\xi}}

\newcommand{\hf}{\hat{f}}
\newcommand{\hp}{\hat{p}}
\newcommand{\hr}{\hat{r}}
\newcommand{\hs}{\hat{s}}
\newcommand{\hw}{\hat{w}}
\newcommand{\hx}{\hat{x}}

\newcommand{\hN}{\hat{N}}

\newcommand{\hbSigma}{\hat{\bm{\Sigma}}}

\newcommand{\hbA}{\hat{\mathbf{A}}}
\newcommand{\hba}{\hat{\mathbf{a}}}
\newcommand{\hbs}{\hat{\mathbf{s}}}
\newcommand{\hbx}{\hat{\mathbf{x}}}
\newcommand{\hbv}{\hat{\mathbf{v}}}
\newcommand{\hbw}{\hat{\mathbf{w}}}

\newcommand{\hbW}{\hat{\mathbf{W}}}

\newcommand{\dif}{\text{d}}

\newcommand{\bbC}{\mathbb{C}}
\newcommand{\bbE}{\mathbb{E}}
\newcommand{\bbR}{\mathbb{R}}
\newcommand{\bbS}{\mathbb{S}}
\newcommand{\bbP}{\mathbb{P}}
\newcommand{\bbN}{\mathbb{N}}
\newcommand{\bbZ}{\mathbb{Z}}
\newcommand{\bbH}{\mathbb{H}}
\newcommand{\bbQ}{\mathbb{Q}}

\newcommand{\calA}{\mathcal{A}}
\newcommand{\calB}{\mathcal{B}}
\newcommand{\calC}{\mathcal{C}}
\newcommand{\calD}{\mathcal{D}}
\newcommand{\calE}{\mathcal{E}}
\newcommand{\calF}{\mathcal{F}}
\newcommand{\calG}{\mathcal{G}}
\newcommand{\calH}{\mathcal{H}}
\newcommand{\calI}{\mathcal{I}}
\newcommand{\calJ}{\mathcal{J}}
\newcommand{\calL}{\mathcal{L}}
\newcommand{\calN}{\mathcal{N}}
\newcommand{\calM}{\mathcal{M}}
\newcommand{\calP}{\mathcal{P}}
\newcommand{\calR}{\mathcal{R}}
\newcommand{\calS}{\mathcal{S}}
\newcommand{\calT}{\mathcal{T}}
\newcommand{\calV}{\mathcal{V}}
\newcommand{\calW}{\mathcal{W}}
\newcommand{\calX}{\mathcal{X}}
\newcommand{\calY}{\mathcal{Y}}

\newcommand{\mscl}{\mathscr{\ell}}
\newcommand{\mscm}{\mathscr{m}}

\newcommand{\calhL}{\mathcal{\hat{L}}}
\newcommand{\bcalP}{\bm{\calP}}

\newcommand{\tlA}{\tilde{A}}
\newcommand{\tlC}{\tilde{C}}
\newcommand{\tlD}{\tilde{D}}
\newcommand{\tlR}{\tilde{R}}

\newcommand{\tla}{\tilde{a}}
\newcommand{\tlc}{\tilde{c}}
\newcommand{\tlf}{\tilde{f}}
\newcommand{\tlg}{\tilde{g}}
\newcommand{\tlv}{\tilde{v}}
\newcommand{\tls}{\tilde{s}}
\newcommand{\tlw}{\tilde{w}}
\newcommand{\tlx}{\tilde{x}}
\newcommand{\tly}{\tilde{y}}
\newcommand{\tlz}{\tilde{z}}
\newcommand{\tbSigma}{\tilde{\bSigma}}
\newcommand{\tbmu}{\tilde{\bmu}}
\newcommand{\tbX}{\tilde{\bX}}

\newcommand{\barb}{\bar{b}}
\newcommand{\barm}{\bar{m}}
\newcommand{\barn}{\bar{n}}
\newcommand{\barr}{\bar{r}}
\newcommand{\barv}{\bar{v}}
\newcommand{\barx}{\bar{x}}
\newcommand{\bary}{\bar{y}}
\newcommand{\barz}{\bar{z}}

\newcommand{\barA}{\bar{A}}
\newcommand{\barC}{\bar{C}}
\newcommand{\barD}{\bar{D}}
\newcommand{\barH}{\bar{H}}
\newcommand{\barK}{\bar{K}}
\newcommand{\barL}{\bar{L}}
\newcommand{\barV}{\bar{V}}
\newcommand{\barW}{\bar{W}}
\newcommand{\barX}{\bar{X}}
\newcommand{\barZ}{\bar{Z}}

\newcommand{\barba}{\bar{\ba}}
\newcommand{\barbe}{\bar{\be}}
\newcommand{\barbg}{\bar{\bg}}
\newcommand{\barbh}{\bar{\bh}}
\newcommand{\barbx}{\bar{\bx}}
\newcommand{\barby}{\bar{\by}}
\newcommand{\barbz}{\bar{\bz}}

\newcommand{\barbA}{\bar{\bA}}

\newcommand{\tlbA}{\tilde{\bA}}
\newcommand{\tlbB}{\tilde{\bB}}
\newcommand{\tlbD}{\tilde{\bD}}
\newcommand{\tlbE}{\tilde{\bE}}
\newcommand{\tlbG}{\tilde{\bG}}
\newcommand{\tlbM}{\tilde{\bM}}

\newcommand{\tlbW}{\tilde{\bW}}
\newcommand{\tlbX}{\tilde{\bX}}
\newcommand{\tlbY}{\tilde{\bY}}

\newcommand{\tlba}{\tilde{\ba}}
\newcommand{\tlbf}{\tilde{\mbf}}
\newcommand{\tlbg}{\tilde{\bg}}
\newcommand{\tlbv}{\tilde{\bv}}
\newcommand{\tlbw}{\tilde{\bw}}
\newcommand{\tlbx}{\tilde{\bx}}
\newcommand{\tlby}{\tilde{\by}}
\newcommand{\tlbz}{\tilde{\bz}}

\newcommand{\thetadot}{\Dot{\theta}}

\newcommand{\tc}{\text{c}}
\newcommand{\td}{{\text{d}}}
\newcommand{\ter}{{\text{r}}}
\newcommand{\ts}{{\text{s}}}
\newcommand{\tw}{{\text{w}}}

\newcommand{\bzero}{\mathbf{0}}
\newcommand{\bone}{\mathbf{1}}
\newcommand{\dsone}{\mathds{1}}

\newcommand{\suml}{\sum\limits}
\newcommand{\minl}{\min\limits}
\newcommand{\maxl}{\max\limits}
\newcommand{\infl}{\inf\limits}
\newcommand{\supl}{\sup\limits}
\newcommand{\liml}{\lim\limits}
\newcommand{\intl}{\int\limits}
\newcommand{\ointl}{\oint\limits}
\newcommand{\bigcupl}{\bigcup\limits}
\newcommand{\bigcapl}{\bigcap\limits}

\newcommand{\opconv}{\text{conv}}

\newcommand{\eref}[1]{(\ref{#1})}

\newcommand{\sinc}{\text{sinc}}
\newcommand{\tr}{\text{Tr}}
\newcommand{\diag}{\text{diag}}
\newcommand{\var}{\text{Var}}
\newcommand{\cov}{\text{Cov}}
\newcommand{\tth}{\text{th}}
\newcommand{\proj}{\mathrm{proj}}
\newcommand*\diff{\mathop{}\!\mathrm{d}}
\newcommand{\rarrowp}{\xrightarrow[]{\bbP}}
\newcommand{\rarrowd}{\xrightarrow[]{d}}
\newcommand{\allone}{\mathds{1}}
\newcommand{\norm}[1]{\left\|#1\right\|}
\newcommand{\iu}{{i\mkern1mu}}

\newcommand{\nwl}{\nonumber\\}

\newenvironment{vect}{\left[\begin{array}{c}}{\end{array}\right]}

\newcommand{\calO}{\mathcal{O}}
\newtheorem{theorem}{Theorem}
\newtheorem{Proposition}{Proposition}
\newtheorem{remark}{Remark}
\newtheorem{lemma}{Lemma}
\newtheorem{corollary}{Corollary}
\newtheorem{definition}{Definition}
\newtheorem{assumptions}{Assumptions}
\newtheorem{example}{Example}

%% file: sample.bib
@inproceedings{gunasekar2018characterizing,
  title={Characterizing implicit bias in terms of optimization geometry},
  author={Gunasekar, Suriya and Lee, Jason and Soudry, Daniel and Srebro, Nathan},
  booktitle={International Conference on Machine Learning},
  pages={1832--1841},
  year={2018},
  organization={PMLR}
}

@article{d2023stochastic,
  title={Stochastic Mirror Descent: Convergence Analysis and Adaptive Variants via the Mirror Stochastic Polyak Stepsize},
  author={D'Orazio, Ryan and Loizou, Nicolas and Laradji, Issam H and Mitliagkas, Ioannis},
  journal={Trans. Mach. Learn. Res.},
  year={2023}
}

@article{cai2010singular,
  title={A singular value thresholding algorithm for matrix completion},
  author={Cai, Jian-Feng and Cand{\`e}s, Emmanuel J and Shen, Zuowei},
  journal={SIAM Journal on optimization},
  volume={20},
  number={4},
  pages={1956--1982},
  year={2010},
  publisher={SIAM}
}

@article{mazumder2010spectral,
  title={Spectral regularization algorithms for learning large incomplete matrices},
  author={Mazumder, Rahul and Hastie, Trevor and Tibshirani, Robert},
  journal={The Journal of Machine Learning Research},
  volume={11},
  pages={2287--2322},
  year={2010},
  publisher={JMLR.org}
}

@article{beck2003mirror,
  title={Mirror descent and nonlinear projected subgradient methods for convex optimization},
  author={Beck, Amir and Teboulle, Marc},
  journal={Operations Research Letters},
  volume={31},
  number={3},
  pages={167--175},
  year={2003},
  publisher={Elsevier}
}

@article{recht2010guaranteed,
  title={Guaranteed minimum-rank solutions of linear matrix equations via nuclear norm minimization},
  author={Recht, Benjamin and Fazel, Maryam and Parrilo, Pablo A},
  journal={SIAM review},
  volume={52},
  number={3},
  pages={471--501},
  year={2010},
  publisher={SIAM}
}

@INPROCEEDINGS{Varma2025Exponential,
  author={Varma, K Nithin and Hassibi, Babak},
  booktitle={ICASSP 2025 - 2025 IEEE International Conference on Acoustics, Speech and Signal Processing (ICASSP)}, 
  title={Exponential Convergence of Stochastic Mirror Descent in Over-parameterized Linear Models}, 
  year={2025},
  volume={},
  number={},
  pages={1-5},
  doi={10.1109/ICASSP49660.2025.10888059}}

@article{azizan2021stochastic,
  title={Stochastic mirror descent on overparameterized nonlinear models},
  author={Azizan, Navid and Lale, Sahin and Hassibi, Babak},
  journal={IEEE Transactions on Neural Networks and Learning Systems},
  volume={33},
  number={12},
  pages={7717--7727},
  year={2021},
  publisher={IEEE}
}

@article{soudry2018implicit,
  title={The implicit bias of gradient descent on separable data},
  author={Soudry, Daniel and Hoffer, Elad and Nacson, Mor Shpigel and Gunasekar, Suriya and Srebro, Nathan},
  journal={Journal of Machine Learning Research},
  volume={19},
  number={70},
  pages={1--57},
  year={2018}
}

@article{sun2022mirror,
  title={Mirror descent maximizes generalized margin and can be implemented efficiently},
  author={Sun, Haoyuan and Ahn, Kwangjun and Thrampoulidis, Christos and Azizan, Navid},
  journal={Advances in Neural Information Processing Systems},
  volume={35},
  pages={31089--31101},
  year={2022}
}

@article{azizan2018stochastic,
  title={Stochastic gradient/mirror descent: Minimax optimality and implicit regularization},
  author={Azizan, Navid and Hassibi, Babak},
  journal={arXiv preprint arXiv:1806.00952},
  year={2018}
}

@inproceedings{ji2019implicit,
  title={The implicit bias of gradient descent on nonseparable data},
  author={Ji, Ziwei and Telgarsky, Matus},
  booktitle={Conference on learning theory},
  pages={1772--1798},
  year={2019},
  organization={PMLR}
}
